\title{\LARGE \bf
Continuous Motion Planning with Temporal Logic Specifications \\  using Deep Neural Networks
}
\author{Chuanzheng Wang, Yinan Li,  Stephen L.\ Smith, Jun Liu
\thanks{Chuanzheng Wang, Yinan Li, and Jun Liu are with the Department of Applied Mathematics,        
University of Waterloo, Waterloo, Ontario, Canada, 
        {\tt\small \{cz.wang, yinan.li, j.liu\}@uwaterloo.ca}}
\thanks{Stephen L. Smith is with the Department of Electrical and Computer Engineering, 
University of Waterloo, Waterloo, Ontario, Canada, 
        {\tt\small stephen.smith@uwaterloo.ca}}
}
 \newtheorem{problem}{Problem}
 \renewcommand{\epsilon}{\varepsilon}
 \renewcommand{\theta}{\vartheta}
 \renewcommand{\kappa}{\varkappa}
 \renewcommand{\rho}{\varrho} 
 \renewcommand{\phi}{\varphi}
 \newcommand{\sv}{\;\vert\;}
 \newcommand{\set}[1]{\left\{#1\right\}}
 \newcommand{\word}{\mathbf{w}}
 \newcommand{\until}{\mathbf{U}}
 \newcommand{\always}{\Box}
 \newcommand{\eventually}{\lozenge}
 \newcommand{\A}{\mathcal{A}}
 \newcommand{\B}{\mathcal{B}}
 \newcommand{\F}{\mathcal{F}}
 \newcommand{\Q}{\mathcal{Q}}
 \newcommand{\Real}{\mathbb{R}}
 \newcommand{\outprops}{\text{OutProps}}
 \newcommand{\outedges}{\text{OutEdges}}
 \newtheorem{definition}{Definition}
 \newtheorem{thm}{Theorem}
 \newtheorem{rem}{Remark}
\newcommand{\algorithmicbreak}{\textbf{break}}
\newcommand{\BREAK}{\STATE \algorithmicbreak}
\newcolumntype{P}[1]{>{\centering\arraybackslash}p{#1}}
\begin{document}

\maketitle
\thispagestyle{empty}
\pagestyle{empty}

\begin{abstract}

In this paper, we propose a model-free reinforcement learning method to synthesize control policies for motion planning problems with continuous states and actions. The robot is modelled as a labeled discrete-time Markov decision process (MDP) with continuous state and action spaces. Linear temporal logics (LTL) are used to specify high-level tasks. We then train deep neural networks to approximate the value function and policy using an actor-critic reinforcement learning method. The LTL specification is converted into an annotated limit-deterministic B\"uchi automaton (LDBA) for continuously shaping the reward so that dense rewards are available during training. A na\"ive way of solving a motion planning problem with LTL specifications using reinforcement learning is to sample a trajectory and then assign a  high reward for training if the trajectory satisfies the entire LTL formula. However, the sampling complexity needed to find such a trajectory is too high when we have a complex LTL formula for continuous state and action spaces. As a result, it is very unlikely that we get enough reward for training if all sample trajectories start from the initial state in the automata. In this paper, we propose a method that samples not only an initial state from the state space, but also an arbitrary state in the automata at the beginning of each training episode. We test our algorithm in simulation using a car-like robot and find out that our method can learn policies for different working configurations and LTL specifications successfully. 

\end{abstract}

\section{INTRODUCTION}

Traditionally, motion planning problems consider generating a trajectory for reaching a specific target while avoiding obstacles \cite{karaman2011sampling}. However, real-world applications often require more complex tasks than simply reaching a target. As a result, recent motion planning problems consider a class of high-level complex specifications that can be used to describe a richer class of tasks. A branch of planning approaches has been proposed recently that describes high-level tasks like reaching a sequence of goals or ordering a set of events using formal languages such as linear temporal logic (LTL)~\cite{loizou2004automatic}. As a simple example, the task of \textit{reaching region A and then reaching region B} can be easily expressed as an LTL formula. To deal with LTL specifications, an approach for dealing with a point-mass robot model has been proposed in \cite{fainekos2009temporal}. A control synthesis technique with receding horizon control has been proposed in \cite{wongpiromsarn2009receding} to handle a linear robot model. The approach in \cite{bhatia2010sampling} uses sampling-based method to deal with nonlinear dynamic robot models with LTL specifications. However, this method suffers from the \textit{curse of dimensionality} limiting its use to low-dimensional system models. 

Reinforcement learning has achieved great success in the past decades both in terms of theoretical results~\cite{sutton2000policy} and application~\cite{silver2016mastering,silver2017mastering}. It is a way of learning the best actions for a Markov decision process (MDP) by interacting with the environment~\cite{sutton1998introduction}. It is efficient in solving problems of complex systems with or without knowing a model \cite{sutton2018reinforcement}. Early works were mainly based on $Q$-learning \cite{watkins1989learning} and policy gradient methods \cite{sutton2000policy}. The actor-critic algorithm \cite{peters2005natural} is also widely used with two components, namely an actor and a critic. The actor is used as the policy, which tells the system what action should be taken at each state, and the critic is used to approximate the state-action value function.  Modern reinforcement learning methods take advantage of deep neural networks to solve problems with large state and action spaces. A deep $Q$-network (DQN) \cite{mnih2013playing} uses a deep neural network to approximate state-action values and learns an implicit control policy by improving this $Q$-network. In \cite{silver2014deterministic}, a deterministic policy gradient method is proposed with better time efficiency and consequently the deep deterministic policy gradient method (DDPG) \cite{lillicrap2015continuous} leverages this idea of a deterministic policy and uses two deep neural networks, an actor network and a critic network, to solve problems of continuous state and action spaces. 

Reinforcement learning algorithms have been applied to solve model-free robotic control problems with temporal logic specifications. In \cite{sadigh2014learning}, a $Q$-learning method is used to solve an MDP problem with LTL specifications. The temporal logical formula is transformed into a deterministic Rabin automaton (DRA) and a real-valued reward function is designed in order to satisfy complex requirements. In \cite{gao2019reduced}, a reduced variance deep $Q$-learning method is used to approximate the state-action values of the product MDP with the help of deep neural networks. Another branch of methods convert the LTL formula into a limit-deterministic B\"uchi automaton (LDBA) and a synchronous reward function is designed based on the acceptance condition of the LDBA as in \cite{hasanbeig2019reinforcement} and \cite{hahn2019omega}. The authors in \cite{hasanbeig2018logically} use neural fitted $Q$-iteration to solve systems with continuous state. In \cite{oura2020reinforcement}, the limit-deterministic generalized B\"uchi automaton (LDGBA) is used to convert the LTL formula. Moreover, a continuous state space is considered in \cite{hasanbeig2019certified}. 

However, training deep networks for continuous controls is more challenging due to significantly increased sample complexity and most approaches achieve poor performance on hierarchical tasks, even with extensive hyperparameter tuning \cite{duan2016benchmarking}. This is because the reward function is so sparse if we only have a terminal reward when the accepting conditions are satisfied. The authors in \cite{li2018policy} use a time-varying linear Gaussian process to describe the policy and the policy updated by maximizing the robustness function in each step. In \cite{lavaei2020formal}, the authors use model-free learning to synthesize controllers for finite time horizon for systems with continuous state and action space. The authors in \cite{yuan2019modular} also propose a training scheme for continuous state and action space but using one neural network for each individual automaton state. This requires a large number of networks when the LTL specification is complex.  

As a result, the main contribution of this paper is that by using an annotated LDBA converted from the LTL specification and a simple idea that randomly samples from the automaton states without initializing it to a fixed initial state (as given by the translated automaton), we can effectively train deep networks to solve continuous control problems with temporal logic goals. We show in simulations that our method achieves a good performance for a nonlinear robot model with complex LTL specifications.

\section{Preliminary and Problem Definition}
\subsection{Linear Temporal Logic}\label{sec:ldba}
Linear temporal logic (LTL) formulas are composed over a set of atomic propositions $AP$ by the following syntax:
\begin{equation}\label{eq:ltl}
\varphi::=\textbf{true}\sv p\sv \neg\varphi\sv \varphi\wedge\varphi\sv \varphi\until\varphi\sv \bigcirc\varphi,
\end{equation}
where $p\in AP$ is an atomic proposition, \textbf{true}, {\em negation\/} ($\neg$), {\em conjunction\/} ($\wedge$) are propositional logic operators and {\em next\/} ($\bigcirc$), {\em until\/} ($\until$) are temporal operators.

Other propositional logic operators such as \textbf{false}, {\em disjunction} ($\vee$), {\em implication} ($\rightarrow$), and temporal operators {\em always} ($\always$), {\em eventually} ($\eventually$) can be derived based on the ones in (\ref{eq:ltl}). A sequence of symbols in $\Sigma=2^{AP}$ is called a \emph{word}. We denote by $\word\vDash\varphi$ if the word $\word$ satisfies the LTL formula $\varphi$. Details on syntax and semantics of LTL can be found in~\cite{baier2008principles}.

For probabilistic systems such as Markov decision processes, it is sufficient to use a limit-deterministic B\"uchi automaton (LDBA) over the set of symbols $\Sigma$, which are deterministic in the limit, to guide the verification or control synthesis with respect to an LTL formula. For any LTL formula $\varphi$, there exists an equivalent LDBA that accepts exactly the words described by $\varphi$ \cite{Sickert2016}.

In the current paper, we use transitions-based LDBA, since it is often of smaller size than its state-based version.  We begin by defining a transition-based  B\"uchi automaton and then give a formal definition of an LDBA.

\begin{definition}
    A \emph{transition-based generalized B\"uchi automaton} (TGBA) is a tuple $\A=(\Q,\Sigma,\delta,q_0,\F)$, where $\Q$ is a set of states, $\Sigma$ is a finite alphabet, $\delta:\Q\times\Sigma\to 2^{\Q}$ is the state transition function, $q_0\in \Q$ is the initial state, and $\F=\set{F_1,\cdots,F_m}$ with $F_i\subseteq \Q\times\Sigma\times\Q$ ($i\in\set{1,\cdots,m}, m\geq 1$) is a set of accepting conditions.
\end{definition}

A \emph{run} of a TGBA $\mathcal{A}$ under an input word $\word=\sigma_0\sigma_1\cdots$ is an infinite sequence of transitions in $\Q\times\Sigma\times\Q$, denoted by $\xi=(v_0,\sigma_0,v_1)(v_1,\sigma_1,v_2)\cdots$, that satisfies $v_{i+1}\in \delta(v_i,\sigma_i)$ and $v_i\in \Q$ for all $i\in\mathbb{N}$. Let $\xi[i]=(v_i,\sigma_i,v_{i+1})$, $\outprops(q)=\set{\sigma\in\Sigma\sv \exists q'\in\Q\, \text{s.t.}\, q'\in\delta(q,\sigma)}$ and $\outedges(q)=\set{(q,\sigma,q')\sv \exists q'\in\Q,\sigma\in\Sigma\; \text{s.t.}\; q'\in\delta(q,\sigma)}$. Denote by $(q,\sigma,q')$ the transition between $q,q'\in\Q$ under the input $\sigma\in\Sigma$. A word $\word$ is accepted by $\A$ if there exists a run $\xi$ such that ${\rm Inf}(\xi)\cap F_j\neq\emptyset$ for all $j\in\set{1,\cdots, m}$, where ${\rm Inf}(\xi)=\set{(v,\sigma,v')\in\Q\times\Sigma\times\Q\sv \forall i, \exists j>i,\; \text{s.t.}\;(v,\sigma,v')=\xi[j]}$ is the set of transitions that occur infinitely often during the run $\xi$.

\begin{definition}\label{def:ldba}
    A TGBA $\A=(\Q,\Sigma\cup\set{\varepsilon},\delta,q_0,\F)$ is a \emph{limit-deterministic B\"uchi automaton} (LDBA) if $\Q=\Q_N\cup\Q_D$, $\Q_N\cap\Q_D=\emptyset$, and
    \begin{itemize}
        \item $\delta(q,\sigma)\subseteq \Q_D$ and $|\delta(q,\sigma)|=1$ for all $q\in\Q_D$ and $\sigma\in\Sigma$,
        \item $|\delta(q,\sigma)\cap\Q_N|=1$ for all $q\in\Q_N$ and $\sigma\in\Sigma$,
        \item $F\subseteq\Q_D\times\Sigma\times\Q_D$ for all $F\in\F$.
        \item if $q'\in\delta(q,\sigma)$ for $\sigma\in\Sigma$, $q\in\Q_N$ and $q'\in\Q_D$, then $\sigma=\epsilon$.
    \end{itemize}
\end{definition}
The transitions from $\Q_N$ to $\Q_D$ are called \emph{$\varepsilon$-transitions} defined in the last condition of Definition \ref{def:ldba}, which are taken without reading any input propositions in $\Sigma$. The correctness of taking any of the $\varepsilon$-transitions can be checked by the transitions in $Q_D$ \cite{Sickert2016}. For an LDBA $\A$, we define $\outprops(q)=\set{\sigma\in \Sigma\sv \exists q'\in\Q\; \text{s.t.}\; q'\in\delta(q,\sigma)}$.

\subsection{Labeled Markov Decision Process}
To capture the robot motion and working properties, we use a continuous labeled Markov decision process with discrete time to describe the dynamics of the robot and its interaction with the environment \cite{Sebastian2002}.
\begin{definition}\label{labelmdp}
	A continuous \emph{labeled Markov Decision Process} (MDP) is a tuple $M=\{S,A,P,R,\gamma,AP,L\}$, where $S\subseteq\Real^n$ is a continuous state space, $A\subseteq\Real^m$ is a continuous action space, $P:S\times A\to\kappa(S)$ is a transition probability kernel with $P(\cdot|s,a)$ defining the next-state distribution of taking action $a\in A$ at state $s\in S$, the function $R:S\times A\times S\to\mathbb{R}$ specifies the reward, $\gamma$ is a discount factor, $AP$ is the set of atomic propositions, and $L:S\to 2^{AP}$ is the labeling function that returns propositions that are satisfied at a state $s\in S$. Here $\kappa(S)$ denotes the set of all probability measures over $S$.
\end{definition}

The labeling function $L$ is used to assign labels from a set $AP$ of atomic propositions to each state in the state space $S$. Given a sequence of states $\mathbf{s}=s_0s_1\cdots$, a sequence of symbols $\mathbf{tr}(\mathbf{s})=L(s_0)L(s_1)\cdots$, called the \emph{trace} of $\mathbf{s}$, can be generated to verify if it meets a LTL specification $\varphi$. If $\mathbf{tr}(\mathbf{s})\vDash\varphi$, we also write $\mathbf{s}\vDash\varphi$.

\begin{definition}
	A \emph{deterministic policy} $\pi$ of a labeled MDP is a function $\pi:S\to A$ that maps a state $s\in S$ to an action $a\in A$.
\end{definition}

Given a labeled MDP, we can define the accumulated reward starting from state $s$ as
\begin{equation*}
 G(s)=\sum_{k=0}^{\infty}\gamma^{k}R(s_k,a_k,s_{k+1}\big\lvert s_0=s).
\end{equation*}

\subsection{Product MDP}
Considering a robot operating in a working space to accomplish a high-level complex task described by an LTL-equivalent LDBA $\mathcal{A}$, the mobility of the robot is captured by a labeled MDP $M$ 

defined as above. We can combine the labeled MDP and the LDBA to obtain a product MDP.
\begin{definition}
	A \emph{product} MDP between a labeled MDP $M=\{S,A,P,R,\gamma,AP,L\}$ and an LDBA $\A=(\Q,\Sigma,\delta,q_0,\F)$ is a tuple
	\[M_p=M\times\A:=\{S_p,A_p,P_p,R_p,AP,L,\F_p,\gamma\},\]where 
	\begin{itemize}
	    \item $S_p=S\times\Q$ is the set of states,
	    \item $A_p=A\cup\set{\varepsilon}$ is the set of actions,
	    
	    \item $P_p:(S\times\Q)\times A\to\kappa(S,\Q)$ is the transition probability kernel defined as
	    \[P_p(s_p'|s_p,a_p)=\begin{cases}
	    P(s'|s,a), & q'=\delta(q,L(s)),a_p\in A,\\
	    1, & s=s',a_p=\varepsilon,\\ 
	    0, & \text{otherwise},
	    \end{cases}\]
	    for all $s_p=(s,q), s_p'=(s',q')\in S_p$,
	    \item $R_p:S_p\times A_p\times S_p\to\mathbb{R}$ is the reward function, and
	    \item $\F_p=\set{F_p^1,\cdots,F_p^m}$ ($m\geq 1$), where $F_p^i=\{((s,q),a,(s',q'))\in S_p\times A_p\times S_p\sv (q,a,q')\in F_i\}$ for all $i\in\set{1,\cdots,m}$ is a set of accepting conditions.
	\end{itemize}

\end{definition}

Likewise, a \textit{run} of a product MDP $M_p=M\times\A$  is an infinite sequence of transitions of the form $$\xi=((s_0,q_0),a_0,(s_1,q_1))((s_1,q_1),a_1,(s_2,q_2))\cdots,$$
where $((s_i,q_i),a_i,(s_{i+1},q_{i+1}))\in S_p\times A_p \times S_p$. 
We say that $\xi$ is an \textit{accepted run}, denoted by $\xi\vDash\A$, if ${\rm Inf}(\xi)\cap F_p^i\neq\emptyset$ for all $i\in\set{1,\cdots,m}$, where ${\rm Inf}(\xi)$ is the set of transitions that occur infinitely often in $\xi$.

\subsection{Problem Formulation}

We consider the problem in which a robot and its environment are modelled as an MDP $M$, and the robot task is specified as an LTL formula $\phi$.  Given an initial state $s_0$, we define the probability of an MDP $M$ satisfying $\phi$ under a policy $\pi$ from $s_0$  as
\begin{equation*}
    \mathbb{P}_{\pi}^M(s_0\vDash\phi):=\mathbb{P}(\mathbf{s} \in \mathcal{P}_{\pi}^M|\mathbf{s}\vDash\phi,\,\mathbf{s}=s_0s_1\cdots),
\end{equation*}
where $\mathcal{P}_{\pi}^M$ is the set of all infinite sequences of states of the MDP that are induced from the policy $\pi$. We say a formula $\phi$ is satisfied by a policy $\pi$ at $s_0$ if $\mathbb{P}_{\pi}^M(s_0\vDash\phi)>0$. If such a policy exists, we say that $\phi$ is satisfiable at $s_0$.  

Then the problem we address in this paper is as follows.

\begin{problem}\label{prob:ltl+mdp}
    Given a continuous labeled MDP $M=\{S,A,P,R,\gamma,AP,L\}$ and an LTL specification $\phi$, find a policy $\pi^*$ such that $\phi$ is satisfied by $\pi^*$ for each $s_0\in S$ such that $\phi$ is satisfiable at $s_0$. 
\end{problem}

As we have seen in Section \ref{sec:ldba}, an LTL formula $\varphi$ can be translated into an LDBA $\A_\varphi$. Therefore, solving Problem \ref{prob:ltl+mdp} is equivalent to solving the following control problem for the corresponding product MDP $M_p$ of the given MDP $M$ and $\A_\varphi$ \cite{Courcoubetis1995}.

Consider the product MDP $M_p=M\times A_\varphi$. We say that a policy $\pi$ for $M_p$ satisfies $\A_\varphi$ at $(s_0,q_0)$, where $s_0\in S$ and the initial state $q_0$ of $\A_\varphi$, if $\mathbb{P}_{\pi}^{M_p}((s_0,q_0)\vDash\A_\phi)>0$, where
    \begin{align*}
    &\mathbb{P}_{\pi}^{M_p}((s_0,q_0)\vDash\A_\varphi):=\\
    &\mathbb{P}(\xi \in \mathcal{P}_{\pi}^{M_p}|\xi\vDash\A_\phi,\,\xi=((s_0,q_0),a_0,(s_1,q_1))\cdots),
\end{align*}
and $\mathcal{P}_{\pi}^{M_p}$ is the set of all runs of the product MDP that are induced from the policy $\pi$. If such a policy exists, we say that $\A_\varphi$ is satisfiable at $(s_0,q_0)$. 

\begin{problem}\label{prob2}
    Given a continuous labeled MDP $M=\{S,A,P,R,\gamma,AP,L\}$ and an LDBA $\A_\varphi$ translated from an LTL specification $\varphi$, find a policy $\pi^*$ for the product MDP $M_p=M\times A_\varphi$ such that $A_\varphi$ is satisfied by $\pi^*$ for each $s_0\in S$ and the initial state $q_0$ of $\A_\varphi$ such that $\A_\varphi$ is satisfiable. 
\end{problem}

\section{Reinforcement Learning Method}

For an MDP, the value of a state $s$ under a policy $\pi$, denoted as $v_{\pi}(s)$, is the expected return when starting from $s$ and following $\pi$ thereafter. We define $v_{\pi}(s)$ formally as 
\begin{align*}
    v_{\pi}(s)=\mathbb{E}_{\pi}\left[\sum_{k=0}^{\infty}\gamma^{k}R(s_k, a_k, s_{k+1})\Big|s_0=s\right]
\end{align*}
for all $s\in S$. Similarly, the $Q$-value $Q_{\pi}(s,a)$ for a policy $\pi$ is the value of taking action $a$ at state $s$ and following $\pi$ thereafter. It is defined as
\begin{equation*}
    Q_{\pi}(s,a)=\mathbb{E}_{\pi}\left[\sum_{k=0}^{\infty}\gamma^{k}R(s_k, a_k, s_{k+1})\Big|s_0=s,a_0=a\right].
\end{equation*}
 An optimal state-action value $Q^*(s,a)=\max_{\pi}Q_{\pi}(s,a)$ is the maximum state-action value achieved by any policy for state $s$ and action $a$. $Q$-learning\cite{watkins1989learning} is a method of finding the optimal strategy for an MDP. It learns the state-action value $Q(s,a)$ by using the update rule $Q(s,a)\gets Q(s,a)+\beta[R(s,a,s')+\gamma Q(s',a')-Q(s,a)]$, where $\beta\in[0,1]$ is a learning rate and $s'$ is the next-state of taking action $a$ at state $s$ and $a'$ is the best action at $s'$ according to the current $Q$-values.

The deep deterministic policy gradient method (DDPG) \cite{silver2014deterministic} introduces a parameterized function  $\pi^{\theta}:S\to A$, called an actor, to represent the policy using a deep neural network. A critic $Q^{\omega}(s,a)$, which also uses a deep neural network with a parametric vector $\omega$, is used to represent the action-value function. The critic $Q^{\omega}(s,a)$ is updated by minimizing the following loss function:
	\begin{equation*}
			L(\omega)=\mathbb{E}_{s\sim\rho^{\pi^{\theta}}}\bigg[(y-Q^{\omega}(s,\pi^{\theta}(s)))^2\bigg],
	\end{equation*}
where $y=R(s, a)+\gamma Q^{\omega}(s',a')$ such that $a'=\pi^{\theta}(s')$ and $\rho^{\pi^{\theta}}$ is the state distribution under policy $\pi^{\theta}$.
The objective function of the deterministic policy defined as
\begin{equation*}
J(\pi^{\theta})=\int_{S}\rho^{\pi}(s)R(s,\pi^{\theta}(s))ds=\mathbb{E}_{s\sim\rho^{\pi}}[R(s,\pi^{\theta}(s))].
\end{equation*}
is used to evaluate the performance of a policy for the MDP. According to the \emph{Deterministic Policy Gradient Theorem} \cite{silver2014deterministic}, 
\begin{align*}
\begin{split}
\nabla_{\theta}J(\pi^{\theta})&=\int_{S}\rho^{\pi}(s)\nabla_{\theta}\pi^{\theta}(s)\nabla_{a}Q^{\pi}(s,a)|_{a=\pi^{\theta}(s)}ds\\
&=\mathbb{E}_{s\sim\rho^{\pi}}[\nabla_{\theta}\pi^{\theta}(s)\nabla_{a}Q^{\pi}(s,a)|_{a=\pi^{\theta}(s)}],
\end{split}
\end{align*}
and the deterministic policy can be updated by
\begin{equation*}
\theta_{k+1}=\theta_{k}+\alpha\mathbb{E}_{s\sim\rho^{\pi^{\theta_{k}}}}[\nabla_{\theta_k}Q^{\pi^{\theta_k}}(s,\pi^{\theta_k}(s))],
\end{equation*}
where $\alpha\in[0,1]$ is a learning rate.
By applying the chain rule,
\begin{equation*}
\theta_{k+1}=\theta_{k}+\alpha\mathbb{E}_{s\sim\rho^{\pi^{\theta_k}}}[\nabla_{\theta_k}\pi^{\theta_k}(s)\nabla_a Q^{\pi^{\theta_k}}(s,a)|_{a=\pi^{\theta_k}(s)}].
\end{equation*}
It is stated in \cite{lillicrap2015continuous} that we can use the critic to approximate the objective function of the policy, which means $Q^{\pi^{\theta_k}}\approx Q^{\omega}$. As a result, we can update the parameters of the actor using
\begin{equation}\label{eq:policy}
    \theta_{k+1}=\theta_{k}+\alpha\mathbb{E}_{s\sim\rho^{\pi^{\theta_k}}}[\nabla_{\theta_k}\pi^{\theta_k}(s)\nabla_a Q^{\omega}(s,a)|_{a=\pi^{\theta_k}(s)}].
\end{equation}
The DDPG method moves the parameter vector $\theta$ greedily in the direction of the gradient of $Q$ and is more efficient in solving MDP problems with continuous state and action spaces. As a result, we propose a learning method to solve motion planning problems with LTL specifications based on DDPG. 

\section{Reinforcement Learning with LDBA-Guided Reward Shaping}
	
In this section, we introduce our method of solving a continuous state and action MDP with LTL specifications using deep reinforcement learning. The LTL specification is transformed into an annotated LDBA and a reward function is defined on the annotated LDBA for reward shaping in order to training the networks with dense reward.

\subsection{Reward Shaping}

Our definition of the reward function for the product MDP $M_p$ depends on an annotated LDBA defined as follows.
\begin{definition}\label{def:annotated}
    An \emph{annotated} LDBA $(\A,\B)$ is an LDBA  $\A=(\Q,\Sigma,\delta,q_0,\F)$ augmented by $\B=\set{b_1,\ldots,b_m}$ ($m\geq 1$), where $\F=\set{F_1,\cdots,F_m}$ and $b_i:\Q\times\Sigma\times\Q\to\set{0,1}$ ($i\in\set{1,\ldots,m}$) is a function assigning 0 or 1 to all the edges of $\A$ according to the following rules:
    \begin{align*}
    b_i(q,\sigma,q')=
        \begin{cases}
        1 & (q,\sigma,q')\in F_i,\\
        0 & \text{otherwise}.
        \end{cases}
    \end{align*}
\end{definition}

For any $i\in\set{1,\ldots,m}$, the map $b_i$, which corresponds to $F_i$, assigns 1 to all the accepting transitions in $F_i$ and 0 to all others. The set $\B$ defined above, however, only marks the accepting transitions but not the other transitions that can be taken so that the accepting transitions can happen in some future steps. In order to also identify such transitions to guide the design of the reward function of the product MDP, we provide the following Algorithm~\ref{alg:bmap} to pre-process the set $\B$ of boolean maps.

\begin{algorithm}[ht]
	\caption{Pre-process $\B$ with respect to $\A$}
	\label{alg:bmap}
	\begin{algorithmic}[1]
	\REQUIRE An annotated LDBA $(\A,\B)=(\Q,\Sigma,\delta,q_0,\F,\B)$
	\STATE Define a function $g:\Q\to\set{0,1}$
	\FOR{$i=1,\ldots,m$}
	\STATE $g(q)\leftarrow 0$ for all $q\in\Q$
	\STATE $g(q)\leftarrow 1$ if $\exists e\in\outedges(q)$ s.t. $e\in F_i$
	\REPEAT
	\FOR{$q\in\Q$ s.t. $g(q)=0$ }
	\IF{$g(q')=1$ for some $q'\in\delta(q,\sigma)$}
	\STATE $b_i(q,\sigma,q')\leftarrow 1$
	\STATE $g(q)\leftarrow 1$
	\ENDIF
	\ENDFOR
	\UNTIL{$g$ is unchanged}
	\ENDFOR
	\end{algorithmic}
\end{algorithm}

The function $g:\Q\to\set{0,1}$ in line 1 of Algorithm~\ref{alg:bmap} is defined to gradually mark every state in $\Q$ that has outgoing transitions annotated by 1. For each set $F\in\F$, the function $g$ is initialized (in line 3 and 4) to 1 for any state $q\in\Q$ that has at least one accepting outgoing transition and 0 for any other states. By using $g$, the loop from line 5 to 12 in Algorithm~\ref{alg:bmap} marks backwardly the state $q\in\Q$ with no outgoing transition marked 1 (i.e., $g(q)=0$), through which the accepting transitions can be taken. The loop terminates in a finite number of steps since the set $\Q$ of states is finite and $g$ can only be marked to 1 not $0$. After running Algorithm~\ref{alg:bmap}, for each $i\in\set{1,\cdots,m}$, the map $b_i\in\B$ marks 1 to the transitions that either are accepting in $F_i$ or can lead to the occurrence of accepting transitions in $F_i$. A state $q\in\Q$ with $g(q)=0$ after the end of $i$th for loop (for all $i\in\set{1,\cdots,m}$) is called a \emph{trap}, because accepting transitions do not occur in any run that passes through $q$.

Since any accepting run of an LDBA $\A$ should contain infinitely many transitions from each $F\in\F$, the status that whether there is at least one transition in any $F\in\F$ is taken should be tracked. For this purpose, we let $V$ be a Boolean vector of size $m\times 1$ and $V[i]$ be the $i$th element in $V$, where $k$ is the number of subsets in the accepting condition $\F$ and $i\in\set{1,\cdots,m}$. The vector $V$ is initialized to all ones and is updated according to the following rules:
\begin{itemize}
    \item If a transition in set $F_i$ is taken, then $V[i]=0$.
    \item If all elements in $V$ are 0, reset $V$ to all ones.
\end{itemize}

Now we define a function $b:\Q\times\Sigma\times\Q\to\set{0,1}$ that is updated by vector $V$ as follows:
\begin{align}\label{eq:b}
    b(q,\sigma,q')=\bigvee_{i=1}^m \left(b_i(q,\sigma,q')\wedge V[i]\right).
\end{align}
For a transition $(q,\sigma,q')$, $b(q,\sigma,q')=1$ if and only if there exists an $F_i$ that has not been visited (i.e., $V[i]=1$) and $b_i(q,\sigma,q')=1$.

Based on the above definitions, the reward function of the product MDP $M_p$ is defined as:
\begin{align}\label{eq:Rp}
    &R_p(s_p,a,s_p')=\nonumber\\
    &\begin{cases}
    r_n d(s,E)(1-b(e)) + r_g b(e) & \exists\sigma,\text{s.t.}\, b(q,\sigma,q')=1,\\
    r_d & \text{otherwise},
    \end{cases}
\end{align}
where $s_p=(s,q)$, $s_p'=(s',q')$, $e=(q,L(s),q')$, the numbers $r_n$ and $r_g$ satisfy $r_d<r_n<0<r_g$, $|r_n|\ll|r_d|\ll|r_g|$, the function $b$ is given in (\ref{eq:b}). The set $E$ is given by
\begin{align}\label{eq:E}
    E=\bigcup_{\substack{\sigma\in\outprops(q),\\b(q,\sigma,q')=1}}L^{-1}(\sigma).
\end{align}
The term $d(s,E)=\inf_{s''\in E}\set{d(s,s'')}$ measures the distance from the MDP state $s$ to the set $E$, where $d(s,s'')$ denotes the distance between the states $s$ and $s''$. 

The large positive number $r_g$ is used to reward taking an accepting transition or a transition that can lead to an accepting one, the small negative number $r_n$ is used to guide the transitions in the state space $S$ of the MDP $M$ to encourage the occurrence of the desired transitions between LDBA states, and the negative reward $r_d$ will be collected if the corresponding run in $\A$ hit a trap.

\subsection{The Proposed Algorithm}
The authors of  \cite{gao2019reduced} propose a method that initializes each episode with the initial Rabin state for a discrete product MDP model. The approach in \cite{sadigh2014learning} also resets the Rabin state with the initial state $q_0$ periodically. The main drawback of doing this is that we can only have a good reward if a training episode produces a trajectory that successfully reaches an accepting state in the DRA.  However, for a product MDP with continuous state and action spaces, the sampling complexity of getting such a satisfactory trajectory is too high when we have a complex LTL formula and consequently, we cannot obtain enough reward to train the neural networks for a good performance. As a result, at the beginning of each episode, we sample a $q_{\mathrm{init}}$ instead of using the initial state $q_0$ as given by the translated automaton. Then the initial state of the product MDP is constructed by  using this sampled $q_{\mathrm{init}}$. 

\begin{algorithm}[ht]
	\caption{Actor-Critic Algorithm for Continuous Product MDP $M_p$}
	\label{ddpg}
	\begin{algorithmic}[1]
	\small
		\REQUIRE labeled MDP $M$, LDBA $\A$, product MDP $M_p$
		\ENSURE Policy $\pi$
		\STATE Initialize the critic network $Q^{\omega}$, the actor network $\pi^{\theta}$ with arbitrary weights $\omega$ and $\theta$, initialize $V[i]=1$ for any $i\in\{1,..., m\}$ \;
		\STATE Copy target network $Q^{\omega'}$ and $\pi^{\theta'}$ with weights $\omega'\gets\omega$ and $\theta'\gets\theta$\;
		\STATE Initialize buffer $B$\;
		\FOR {each episode}
		\STATE Sample a state $s_0$ from $S$ in $M$ \;
		\STATE Sample a $q_{\mathrm{init}}$ from $\Q$ in $\A$ \;
		\STATE Construct initial state for $\mathcal{M}_p$ with $s_{p0}\gets(s_0,q_{\mathrm{init}})$\;
		\WHILE {$V[i]=1$ for some $i\in\{1,..., m\}$}
		\STATE Get an action $a_t$ from $\pi^{\theta}$\;
		\STATE Simulate from $s_t$ to $s_t'$ using $a_t$\;
		\STATE Get LDBA state $q_t'\gets\delta(q_t,L(s_t))$
		    \IF {{$q_{t}'$ is a \emph{trap}}}
		    \BREAK
		    \ENDIF
		\STATE Get the next state $s'_{pt}\gets(s'_t, q'_t)$
		\STATE Calculate process reward $R(s_{pt},a_t,s_{pt}')$\;
		\STATE Store tuple $\{s_{pt}, a_t, R, s_{pt}'\}$ in buffer $B$\;
		\STATE Sample $N$ batches from the buffer and calculate target values for $i\in N$ with 
		\begin{equation}
		y_i=R_i+\gamma Q^{\omega'}(s_{pi}',\pi^{\theta'}(s_{pi}'))\;
		\end{equation}
		\STATE Update the critic network by minimizing the loss function: 
		\begin{equation}
		L=\frac{1}{N}\sum_{i\in N}(y_i-Q^{\omega}(s_{pi},\pi^{\theta}(s_{pi})))^2\;
		\end{equation}
		\STATE Update the actor network according to:
		\begin{equation}
		\theta_{k+1}\gets\theta_{k}+\alpha\frac{1}{N}\sum_{i\in N}\nabla_{a_i}Q^{\omega}(s_{pi},a_i)\nabla_{\theta}\pi^{\theta}(s_{pi})\;
		\end{equation}
		\STATE Update state $s_{pt}\gets s'_{pt}$
		\STATE Update the target networks:
		\begin{equation}
		\begin{split}
		\theta'&\gets\tau\theta+(1-\tau)\theta'\\
		\omega'&\gets\tau\omega+(1-\tau)\omega'
		\end{split}
		\end{equation}
		
		\ENDWHILE
		\ENDFOR
		
	\end{algorithmic}

\end{algorithm}

We use DDPG \cite{lillicrap2015continuous} to train the neural networks. As most reinforcement learning algorithms in which data has to be independently and identically distributed, a buffer is used here for storing only the last $N$ steps of transition data \cite{mnih2013playing}. At each time step, the tuple $\{s_p, a, R_p, s_p'\}$ is stored into the buffer and a batch of data is uniformly sampled from the buffer for training the networks. As is shown in Algorithm \ref{ddpg} in line 16 and 17, the critic is updated with minimizing the loss function of the neural network and the actor is updated such that the average value is used to approximate the expectation as in Eq.~\ref{eq:policy}. It was discussed in \cite{mnih2015human} that directly implementing deep $Q$-learning with neural networks will be unstable because the $Q$ value is also used for policy network training. As a result, a small change in the $Q$ value may significantly change the policy and therefore change the data distribution. The authors proposed a way of solving this issue by cloning the $Q$-network to obtain a target network after each fixed number of updates. This modification makes the algorithm more stable compared with the standard deep $Q$-learning. We use two target networks $Q^{\omega'}(s_p,a)$ and $\pi^{\theta'}(s_p)$ as in \cite{lillicrap2015continuous}. The target networks are copied from the actor and critic networks in the beginning and the weights of both networks are updated after every several steps by using $\omega'\gets\tau\omega+(1-\tau)\omega'$ and $\theta'\gets\tau\theta+(1-\tau)\theta'$ with $\tau\ll 1$. 

The proposed method to solve a continuous MDP  with LTL specifications is summarized in Algorithm \ref{ddpg}.

\subsection{Analysis of the Algorithm}

While DDPG does not offer any convergence guarantees for approximating a general nonlinear value function, we prove in this section that, if the MDP is finite (e.g. obtained as a finite approximation of the underlying continuous-state MDP), the reward function defined by (\ref{eq:Rp}) does characterize Problem \ref{prob2} correctly in the sense that the optimal policy can satisfy the formula at each state such that the formula is satisfiable. 

\begin{thm}
 Let $\phi$ be an LTL formula and $M_p$ be the product MDP formed from the MDP $M$ and an LDBA translation $\mathcal{A}_{\phi}$ encoding $\phi$. Then there exists some $\gamma^*\in(0,1)$, $r_g$, $r_d$ and $r_n$ given in (\ref{eq:Rp}) such that for all $\gamma\in(\gamma^*,1)$ the optimal policy on $M_p$ satisfies $\mathcal{A}_{\phi}$ for each initial state $s_0\in M_p$ such that $\mathcal{A}_{\phi}$ is satisfiable. 
\end{thm}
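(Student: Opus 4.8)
The plan is to reduce Problem~\ref{prob2} to a reachability question on end components and then settle it with a vanishing-discount (Abelian) argument; the proof then proceeds in four stages. Because the reward~(\ref{eq:Rp}) depends on the bookkeeping vector $V$, which is a function of the run's history, I would first restore the Markov property by forming the augmented finite MDP $\hat M_p$ whose state is $(s,q,V)\in S_p\times\set{0,1}^m$, with $V$ evolving deterministically under the two reset rules. On $\hat M_p$ the reward is Markovian, so a stationary deterministic optimal policy exists for the $\gamma$-discounted problem; moreover a run of $M_p$ is accepting iff the $V$-component of the corresponding run of $\hat M_p$ returns to $\mathbf 1$ infinitely often, so augmenting the state changes neither satisfiability nor the notion of satisfying $\A_\phi$.

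Next I would classify recurrent behavior. Under any stationary policy $\hat M_p$ induces a finite Markov chain, whose runs almost surely enter a bottom strongly connected component (BSCC). I call a BSCC \emph{accepting} if $V$ resets within it infinitely often (equivalently its projection to $\Q$ contains, for every $i$, a transition of $F_i$) and \emph{non-accepting} otherwise. By the standard end-component characterization, $\A_\phi$ is satisfiable at $(s_0,q_0)$ iff some accepting BSCC (an accepting maximal end component) is reachable with positive probability, and for a stationary policy the probability of satisfying $\A_\phi$ equals the probability of reaching an accepting BSCC.

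The crux is a sign dichotomy for the long-run average reward (gain) of each class. For a non-accepting BSCC there is an index $i$ with no $F_i$-transition inside it; since a BSCC is closed, $F_i$ is unreachable from it, so $b_i\equiv 0$ on its edges, and because a global reset requires \emph{all} entries of $V$ to vanish, $V[i]$ is eventually frozen at $1$. By~(\ref{eq:b}) then $b\equiv 0$ on every edge of the class, so~(\ref{eq:Rp}) yields the constant reward $r_d$ and the gain equals $r_d<0$. For an accepting BSCC, while chasing any not-yet-visited $F_i$ every state reaches an $F_i$-transition inside the class (closedness together with the marking produced by Algorithm~\ref{alg:bmap}), so each state has an outgoing edge with $b=1$; hence only the rewards $r_g$ (on an on-track edge) and $r_n\,d(s,E)$ (off-track) occur. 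Since an on-track edge recurs at a rate bounded away from zero while $d(s,E)$ is bounded on the finite state space, choosing $|r_n|$ small enough relative to $r_g$ (as permitted by $r_d<r_n<0<r_g$ and $|r_n|\ll|r_d|\ll|r_g|$) makes this gain a strictly positive number $\rho_+>0$. I expect this stage to be the main obstacle, since it is exactly here that the preprocessing of Algorithm~\ref{alg:bmap}, the reset semantics of $V$, and the closedness of BSCCs must be combined, and where the ordering constraints on $r_n,r_d,r_g$ are actually used.

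Finally I would conclude by comparing values. For a finite MDP, $\lim_{\gamma\to1}(1-\gamma)\,v^\pi_\gamma(s_0)=\sum_{C}\mathbb P^\pi(\text{reach }C)\,\mathrm{gain}(C)$, the transient contributing only an $O(1)$ term. If $\A_\phi$ is satisfiable at $s_0$, the maximal probability $p^*_+$ of reaching an accepting BSCC is positive; a policy attaining it has limit at least $p^*_+\rho_++(1-p^*_+)r_d>r_d$ (using $\rho_+>0>r_d$), whereas any non-satisfying policy reaches only non-accepting BSCCs and has limit $r_d$. Hence for all $\gamma$ close enough to $1$ the optimal value (being at least that of the accepting-reaching policy) strictly exceeds the value of every non-satisfying policy, so the optimal policy must reach an accepting BSCC with positive probability, i.e.\ it satisfies $\A_\phi$. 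Taking $\gamma^*$ to be the maximum of the finitely many per-state thresholds then yields a single $\gamma^*\in(0,1)$ valid for every initial state at which $\A_\phi$ is satisfiable.
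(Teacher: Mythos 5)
Your route is genuinely different from the paper's: the paper argues by contradiction on a pair of policies, writing $v_{\pi_i}(s_0)$ as a mixture of the conditional values on accepting and non-accepting runs, asserting an upper bound $v_{\pi_1}(s_0)\le r_g C$ (with $C$ independent of $\gamma$) for the allegedly non-satisfying optimal policy and a lower bound of the form $p_2\bigl(r_g\tfrac{\gamma^k}{1-\gamma^k}-\tfrac{M}{1-\gamma}\bigr)-(1-p_2)\tfrac{M}{1-\gamma}$ for a satisfying one, and then taking $r_g$ large and $\gamma\to 1$. You instead augment the state with $V$, classify BSCCs by their gain, and finish with a vanishing-discount comparison. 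Your stage 1 is a real improvement in rigor: the reward (\ref{eq:Rp}) is history-dependent through $V$, so without the augmentation the "value function" and "optimal policy" on $M_p$ are not even well defined in the standard MDP sense; the paper passes over this silently.

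However, stage 3 contains a genuine gap, and it sits exactly at the step the paper itself only asserts. You claim that in a non-accepting BSCC the missing index $i$ gives $b_i\equiv 0$ on the BSCC's edges "since a BSCC is closed, $F_i$ is unreachable from it." This inference is invalid: $b_i$ is computed by Algorithm~\ref{alg:bmap} on the automaton $\A$ alone, before any product is formed, so an edge of the BSCC carries $b_i=1$ whenever $F_i$ is reachable from it \emph{in $\A$}, regardless of whether the product chain under the given policy can ever realize the labels needed to follow that automaton path. Concretely, let $\A$ have $q_A,q_B\in\Q_D$ with an accepting self-loop $(q_A,\sigma_g,q_A)\in F_1$ and $\delta(q_A,\sigma_0)=\set{q_B}$, $\delta(q_B,\sigma_0)=\set{q_A}$; Algorithm~\ref{alg:bmap} marks $b_1(q_B,\sigma_0,q_A)=1$. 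A policy that keeps the robot inside the region labeled $\sigma_0$ induces a non-accepting BSCC that traverses this marked edge every second step, so it collects $r_g$ with frequency $1/2$ and its gain is on the order of $\bigl(r_g-\max(|r_d|,|r_n|d_{\max})\bigr)/2>0$, not $r_d$. Hence the sign dichotomy fails as stated, and with it your stage-4 comparison: a non-satisfying stationary policy can have limit value strictly above $r_d$, and even above that of a satisfying policy whose accepting transitions are far apart in the workspace. Note that the same phenomenon is what makes the paper's own bound $v_{\pi_1}(s_0)\le r_g C$ non-obvious (it requires that non-accepting runs earn $r_g$ only boundedly many times, which the marking of Algorithm~\ref{alg:bmap} does not guarantee). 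To close your argument you would need either an additional structural assumption excluding marked edges on non-accepting cycles, or a modification of the reward/marking (e.g., granting $r_g$ only on accepting transitions themselves, or unmarking edges whose target cannot complete the acceptance within the product); as written, the dichotomy on which everything rests is false for the reward exactly as defined in (\ref{eq:Rp}).
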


\begin{proof}
\textit{(Sketch of proof)} Suppose that $\pi_i$ ($i=1,2$) are two policies such that $\pi_i$ has probability $p_i$ of satisfying $\mathcal{A}_{\phi}$ (i.e. producing accepting runs on $M_p$) from an initial state $s_0\in S_p$. We show that if $v_{\pi_1}(s_0)\ge v_{\pi_2}(s_0)$, then $p_1=0$ implies $p_2=0$. Suppose that this is not the case, i.e., $p_2>0$ and $p_1=0$. We have 
\begin{align*}
v_{\pi_i}(s_0) & = \mathbb{E}[G_t|s_t=s_0,\xi_t\vDash\mathcal{A}_{\phi}]\mathbb{P}_{\pi_i}(s_0\vDash\mathcal{A}_{\phi})  \\
&\qquad\qquad + \mathbb{E}[G_t|s_t=s_0,\xi_t\not\vDash\mathcal{A}_{\phi}]\mathbb{P}_{\pi_i}(s_0\not\vDash\mathcal{A}_{\phi}),
\end{align*}
where $\xi_t$ denotes a run of the product MDP under $\pi_i $ starting from $s_t$. 

By carefully estimating the accumulated reward, we can get an upper bound for $v_{\pi_1}(s_0)$ and an lower bound for $v_{\pi_2}(s_0)$ as follows: 
\begin{align*}
v_{\pi_1}(s_0) &\le  p_1 r_g \frac{1}{1-\gamma} + (1-p_1)r_g C = r_gC, \\
v_{\pi_2}(s_0) &\ge  p_2 \left(r_g \frac{\gamma^k}{1-\gamma^k} - \frac{M}{1-\gamma}\right)-(1-p_2)\frac{M}{1-\gamma}, 
\end{align*}
where $M=\max(|r_n|d_{\max},|r_d|)$, $d_{\max}$ is the maximum value that can be taken by $d(\cdot,\cdot)$ in (\ref{eq:Rp}), and $C>0$, $k>0$ are constants (depending on the product MDP). Since $p_2>0$, there exists $\gamma^*\in(0,1)$ and a choice of a sufficiently large $r_g$ (depending on $\gamma^*$ and other constants) such that $v_{\pi_2}(s_0)>v_{\pi_1}(s_0)$ for all $\gamma\in (\gamma^*,1)$. This contradicts $v_{\pi_1}(s_0)\ge v_{\pi_2}(s_0)$. 
\end{proof}

\begin{rem}
Note that this result does not offer guarantees that a policy that maximizes $v_{\pi}(s)$ for all $s$ also maximizes the satisfaction probability for all $s$. Nonetheless, we guarantee that the optimal policy always satisfies the formula, provided that the formula is satisfiable. Our formulation is consistent with that in \cite{oura2020reinforcement}. For future work, we can investigate how to integrate the reward formulation in \cite{hahn2019omega} and those in this paper to maximize satisfaction probability. 
\end{rem}

\section{Simulation Results}	
	
In this section, we test the proposed method with different LTL specifications using a car-like robot as in \cite{Li2018arxiv}: 
\begin{equation}
\begin{split}
\dot{x}&=\frac{v\cos{(\gamma+\theta)}}{\cos{\gamma}},\\
\dot{y}&=\frac{v\sin{(\gamma+\theta)}}{\cos{\gamma}},\\
\dot{\theta}&=v\tan{\phi},
\end{split}
\end{equation}
where $(x,y)$ is the planar position of center of the vehicle, $\theta$ is its orientation, the control variables $v$ and $\phi$ are the velocity and steering angle, respectively, and $\gamma=\arctan{(\tan{(\phi)})/2}$. The state space is $\mathcal{X}=[-5, 5]\times[-5, 5]\times[-\pi, \pi]$ and the control space is $\mathcal{U}=[-1,1]\times[-1,1]$. 
\subsection{Example 1}
In the first example, we test our algorithm with a simple LTL specification
\begin{equation}
\phi_1=\lozenge(a\wedge\lozenge b),
\end{equation}
where $a=[-3.5, -2]\times[-3.5, -2]$ and $b=[2,3.5]\times[2,3.5]$ are two regions in working space. This LTL formula specifies that the robot must reach $a$ first and then reach $b$. We compare our algorithm that samples a random $q$ with the standard method that resets $q$ to $q_0$ at the beginning of each episode. The neural networks are trained for 1 million steps with 200 steps in each episode. The simulation step is $\Delta t=0.1$s. We use $r_g=50$ and $r_n=-0.1$ for the reward function as in Eq. \ref{eq:Rp}. The simulation result of example 1 is presented in Fig.~\ref{fig:example1}. The areas marked as blue are the regions $a$ and $b$. We show the trajectories from an initial point at $(0,-2.5,0)$ in Fig. \ref{fig:example1a}. The black curve is the trajectory generated using the idea of fixing $q_0$ at the beginning of each episode and the red one is the trajectory from our method. It is shown that for this simple LTL specification, both ideas provide a successful trajectory. Fig. \ref{fig:example1b} shows the normalized reward during training for both ideas. The blue one is the normalized reward for the standard method and the red curve is for our method. Our method collects a normalized reward of $-0.1$ for 500k steps of training and $0.2$ for 1M steps of training while the standard method obtains a normalized reward of $-0.3$ and $0.1$ for 500k steps and 1M steps, respectively. The runtime of both methods are the same because this is determined by the number of training steps. 
The success rate of both methods are presented in TABLE \ref{table:1}. We can see that our method achieves better performance in the same amount of time as the standard method.

\begin{table}
\centering
\begin{tabular}{@{}llc@{}}
\toprule
     & \phantom{abc} & Success rate \\ 
\midrule
Standard method & & 76.7\%          \\ 
Our method      & & 83.3\%          \\
\bottomrule
\end{tabular}
\caption{Success rate for example 1 with 30 initial states.}
\label{table:1}
\end{table}

\begin{figure}
	\centering
	\begin{subfigure}[h]{0.4\linewidth}
		\includegraphics[width=1\linewidth]{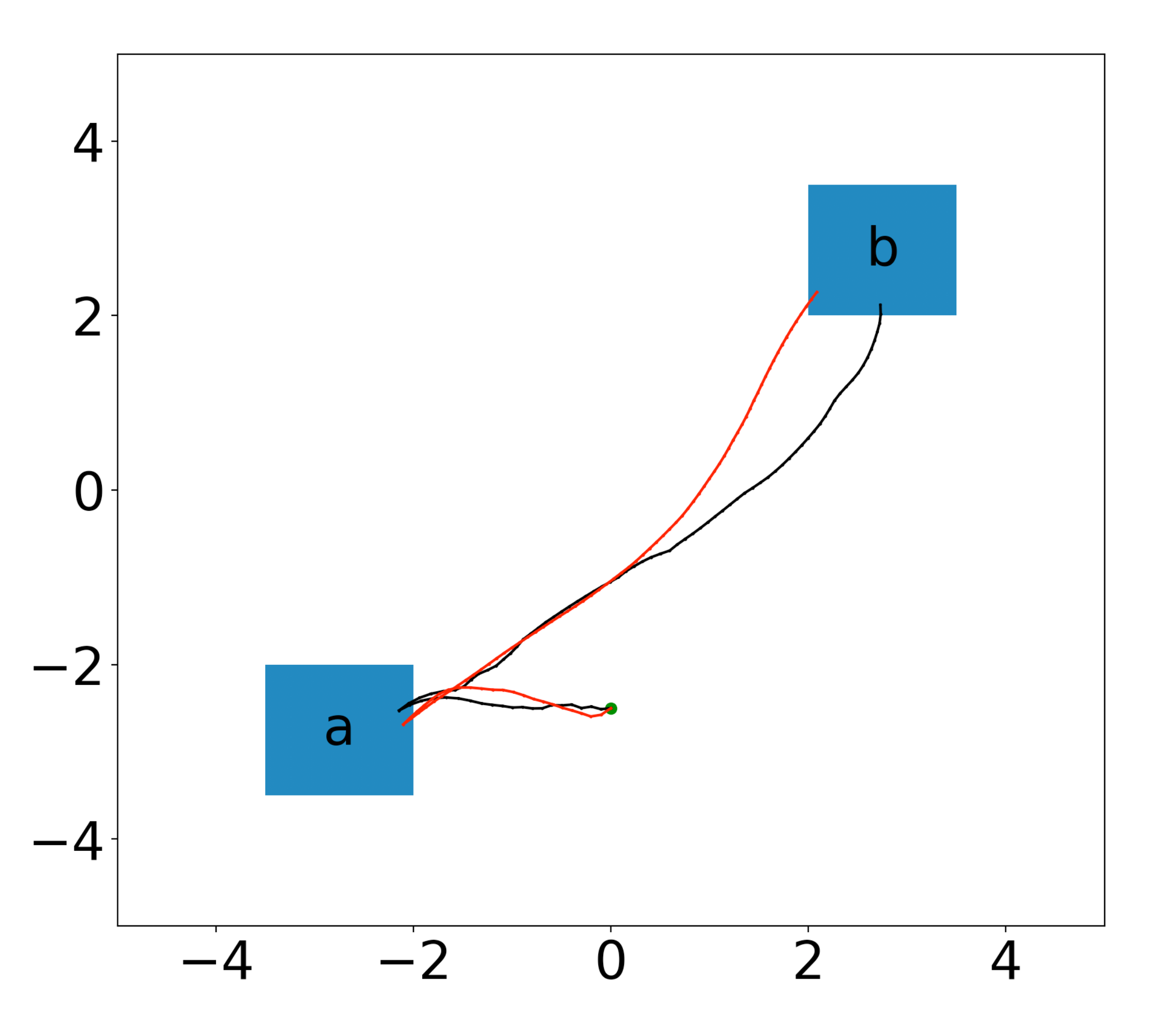}
	\caption{}
	\label{fig:example1a}
	\end{subfigure}
	\begin{subfigure}[h]{0.6\linewidth}
		\includegraphics[width=1\linewidth]{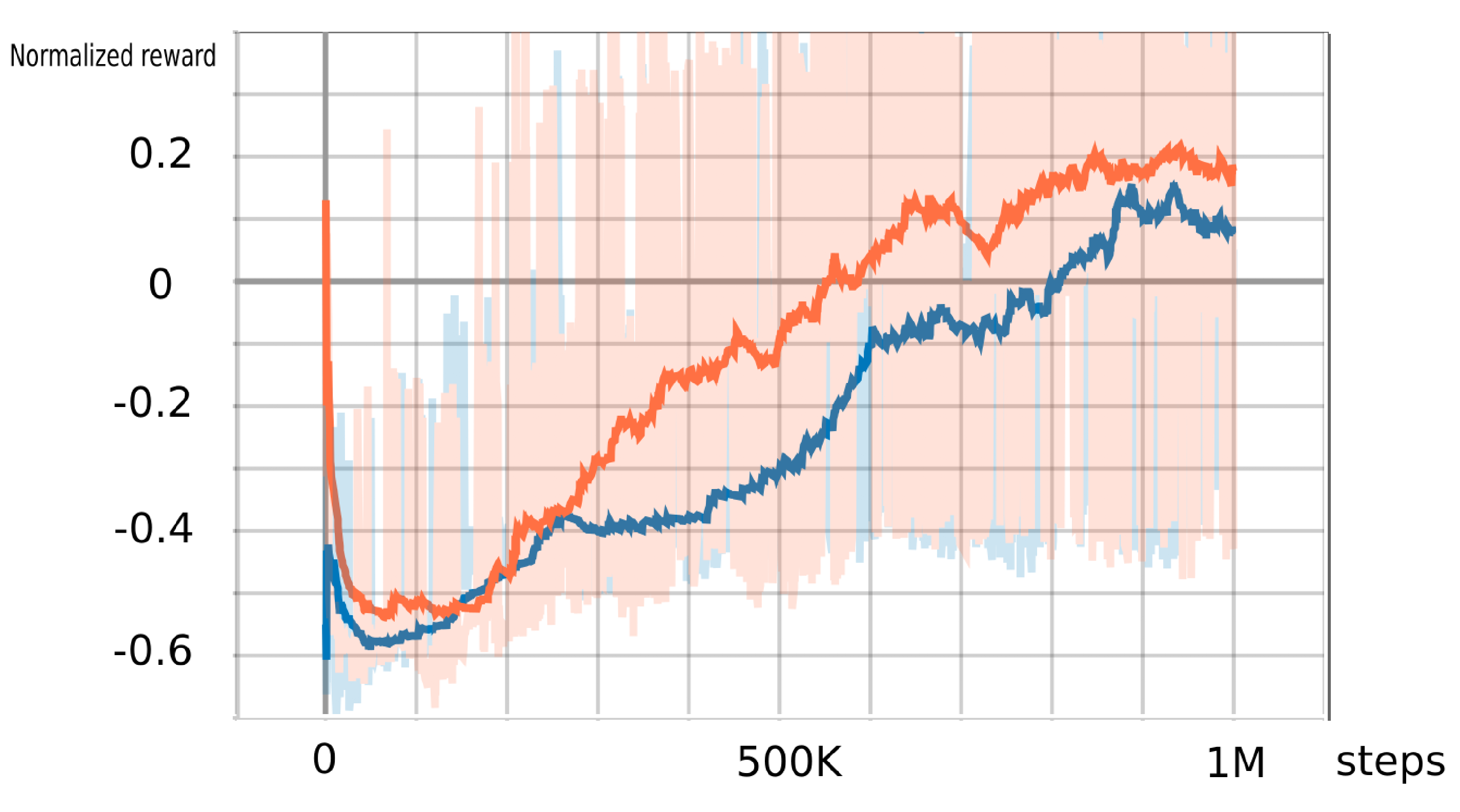}
	\caption{}
	\label{fig:example1b}
	\end{subfigure}
	
	\caption{Simulation and training results of Example 1: (a) Simulated trajectories from the initial point $(0,-2.5,0)$ under the trained policies. (b) Normalized reward during training: the blue curve and region are the normalized reward and variance of the standard method and the red curve and region are the normalized reward and variance of our method.}
	\label{fig:example1}
\end{figure}

\subsection{Example 2}
In the second example, we test our algorithm using following LTL specification:
\begin{equation}
\phi_2=\lozenge(a\wedge\lozenge(b\wedge\lozenge(c \wedge\lozenge d))),
\end{equation}
where $a=[-3,-1.5]\times[-3,-1.5]$, $b=[-3,-1.5]\times[1.5,3]$, $c=[2,3.5]\times[1.5,3]$ and $d=[2,3.5]\times[-3,-1.5]$ are four areas in the working space. In other words, we want the robot to visit $a$, $b$, $c$, $d$ sequentially. 

We train the neural networks for 1 million steps. The system is also simulated using a time step $\Delta t=0.1$s. The reward function is the same as in the first example, where $r_n=-0.1$ and $r_g=50$.  For the standard method, there are 600 steps in each episode. We increased the number of steps in each episode so that a trajectory will be long enough to satisfy the whole LTL specification. In our method, we still have 200 steps in each episode. The trajectory generated from our method is shown in Fig. \ref{fig:example2}. The success rate of both methods are presented in TABLE~\ref{table:2}.

\begin{table}
\centering
\begin{tabular}{@{}llc@{}}
\toprule
     & \phantom{abc} & Success rate \\ 
\midrule
Standard method & & 13.3\%          \\ 
Our method      & & 63.3\%          \\ 
\bottomrule
\end{tabular}
\caption{Success rate for example 2 with 30 initial states.}
\label{table:2}
\end{table}

\begin{figure}
	\centering
	\begin{subfigure}[h]{0.5\linewidth}
		\includegraphics[width=1\linewidth]{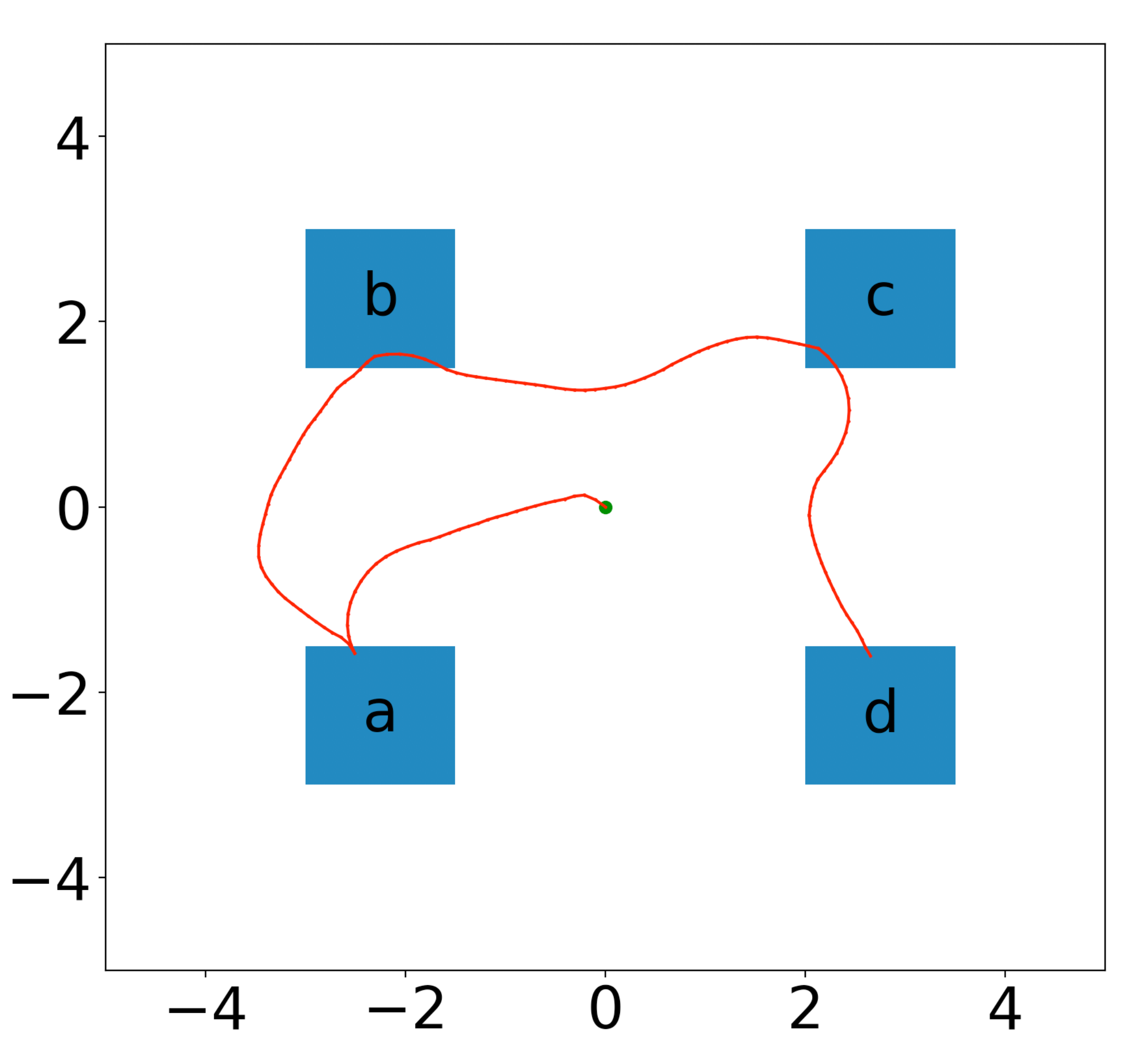}
    \caption{}
	\end{subfigure}

	\caption{Simulation results of Example 2 of our method: $a$, $b$, $c$ and $d$ are four regions marked as blue squares. The initial position $(0,0,0)$ is marked by the green spot in both figures. The red curve is the trajectory.}
	\label{fig:example2}
\end{figure}

\subsection{Example 3}

In the third example, we test our algorithm using the following LTL specification:
\begin{equation}
\phi_3=\lozenge(a\wedge\lozenge d)\vee\lozenge(b \wedge(\neg c \until d)).
\end{equation}
In plain words, the specification encodes that the robot must reach either $a$ or $b$ first. If it reaches $a$ first, then it must next reach $d$ without any other restrictions.  If it reaches $b$ first, then it has to reach $d$ without entering $c$.  We consider this specification with two different layouts of the regions $a$, $b$, $c$ and $d$.
\subsubsection{Case 1}
In case 1, $a=[-4, -3]\times[-3, -2]$, $b=[-4, -3]\times[1, 2]$ and $d=[3, 4.5]\times[1.5, 3]$ are three goals marked as blue and $c=[-1, 1.5]\times[-1, 3.5]$ is a restricted area in the working space marked as yellow as in Fig. \ref{fig:example3a} and Fig. \ref{fig:example3b}. 

The reward function is of $r_n=-0.1$ and $r_g=100$. Since we have a constraint of not entering region $c$ if it reaches $b$, we assign $r_d=-10$ if this happens. We also train the networks for 1 million steps with 200 steps in each episode. The simulation time step is $\Delta t=0.1$s. The trajectories generated from two initial states $(-2,1.5,0)$ and $(-2,-4,0)$ are shown in Fig. \ref{fig:example3a} and Fig. \ref{fig:example3b}. For initial point at $(-2,1.5,0)$, it is closer to region $b$ so that the trajectory reaches $b$ first. According to the LTL specification, it has to avoid $c$ before reaching $d$ if it reaches $b$ first. For initial point at $(-2,-4,0)$, the trajectory first reaches region $a$ and then, the trajectory can reach $d$ without avoiding $c$. The simulation results show that our method can successfully generate a policy that satisfies the LTL specification for different initial points. The algorithm learns that the trajectory should choose the target that is closer to the initial point between $a$ and $b$ and then reach $d$ according to the specification.

\subsubsection{Case 2}
In Case 2, $a$, $b$, $d$ are the same regions as in Case 1. Region $c$ is the area of $C=[-4.5,-2.5]\times[0,3]$. As in Fig. \ref{fig:example3c}, $a$, $b$, $d$ are marked as blue and $c$ is the yellow area plus the area of region $b$. As is shown in the figure, $b$ is enclosed by $c$, which means that if a trajectory enters $b$, it will also be in $c$. This implies that the automaton will be trapped in the deadlock between $q=0$ and $q=3$ and will never reach accepting condition $q=2$ as shown in Fig. \ref{fig:ltl2}. The learning algorithm is able to figure out that even if the initial point is closer to $b$, it still need to reach $a$ first. The result is shown in Fig. \ref{fig:example3c}. The success rate for both case 1 and 2 as in TABLE~\ref{table:3}
\begin{figure}
	\centering
	\begin{subfigure}[h]{0.5\linewidth}
		\includegraphics[width=1\linewidth]{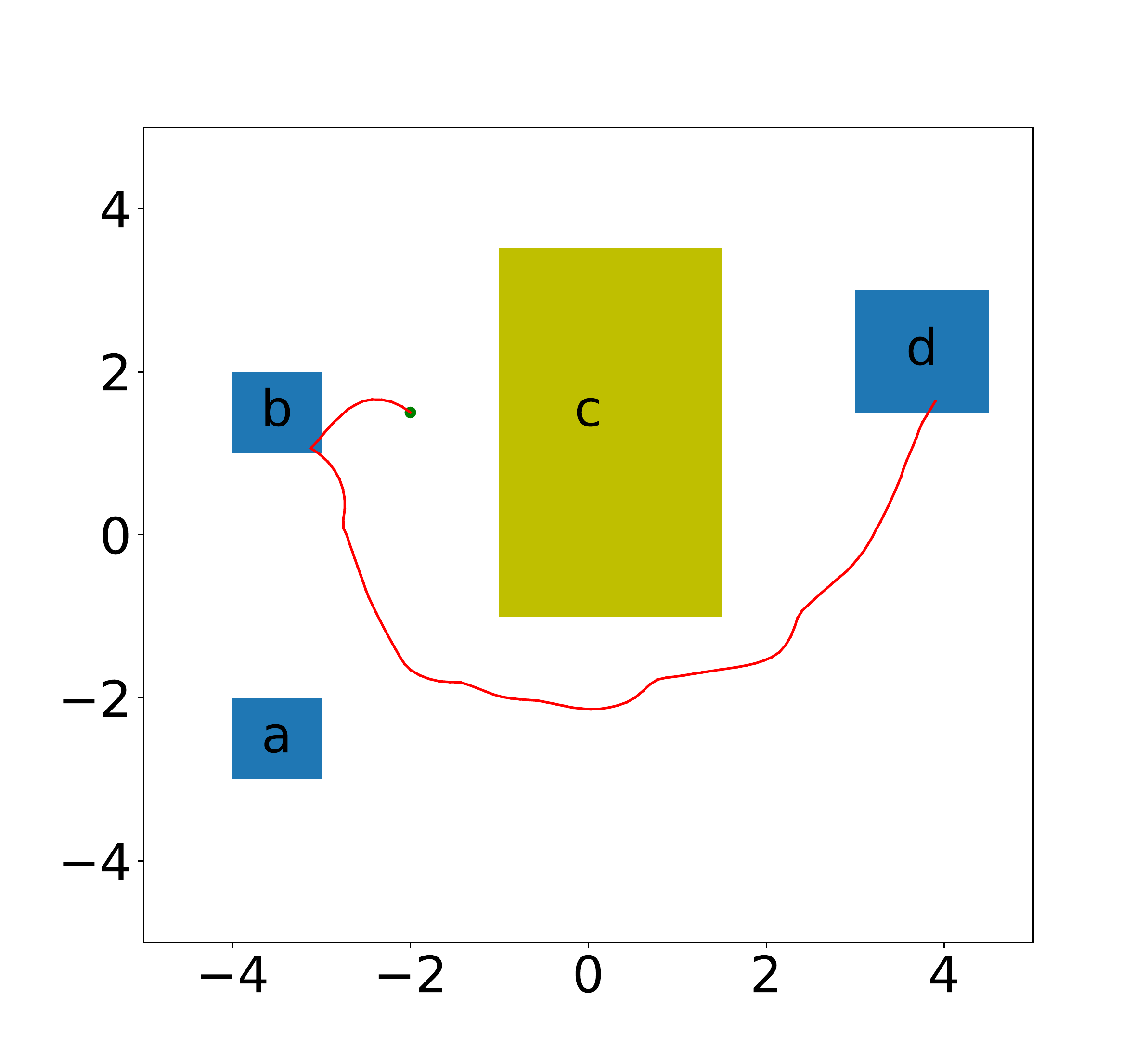}
	\caption{}
	\label{fig:example3a}
	\end{subfigure}
	\begin{subfigure}[h]{0.48\linewidth}
		\includegraphics[width=1\linewidth]{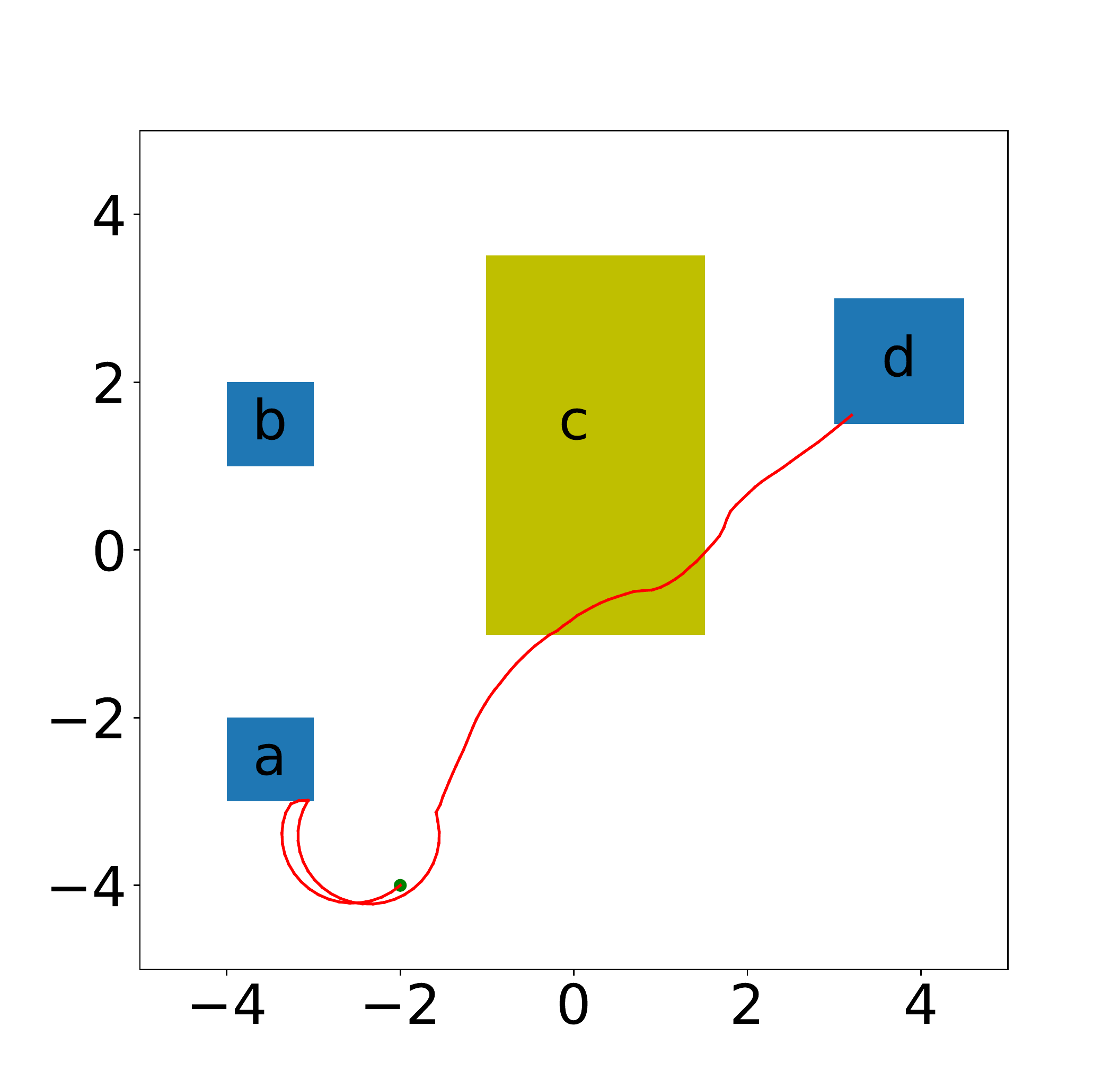}
	\caption{}
	\label{fig:example3b}
	\end{subfigure}
	\newline
    \centering
	\begin{subfigure}[h]{0.5\linewidth}
		\includegraphics[width=1\linewidth]{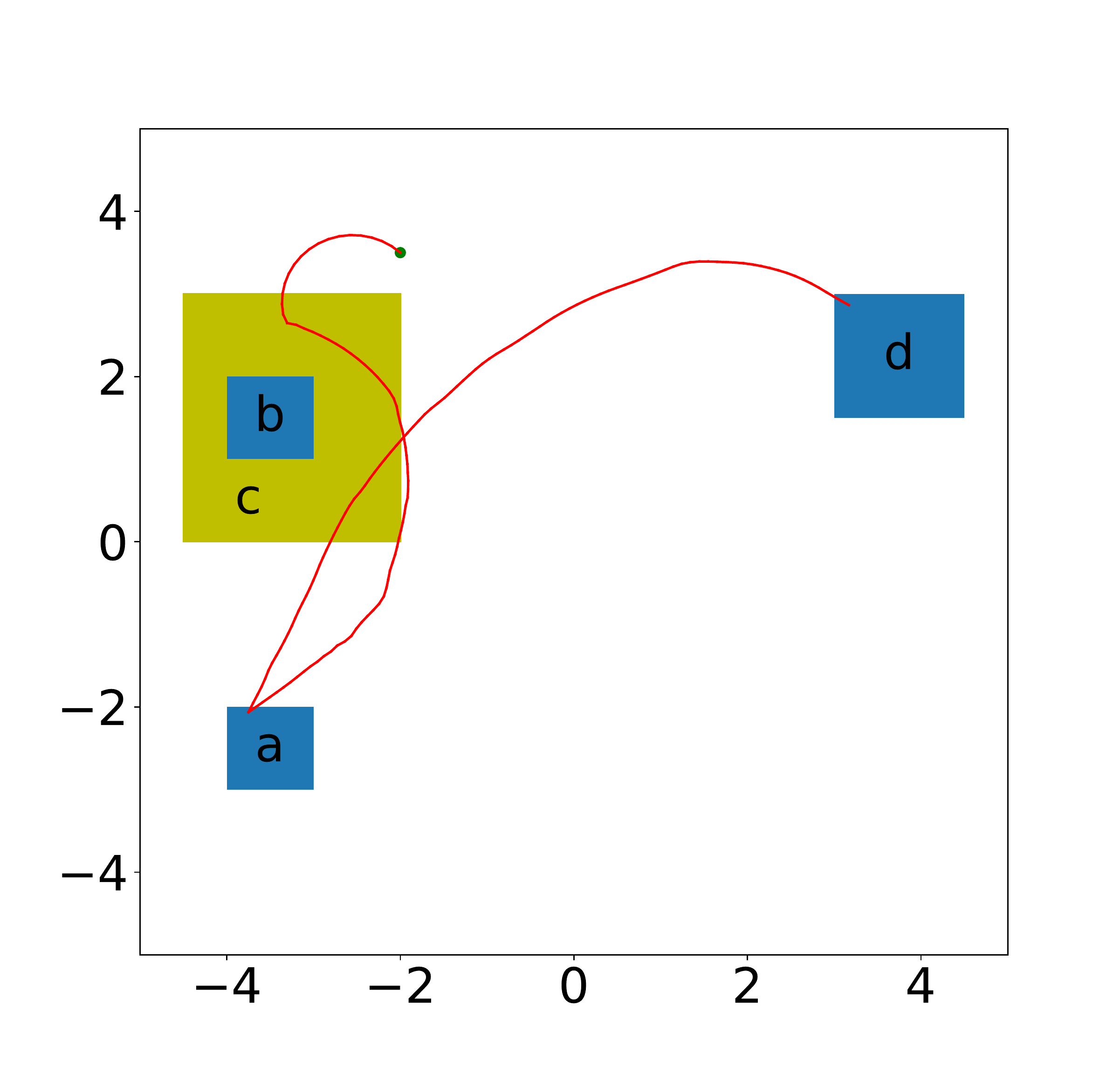}
	\caption{}
	\label{fig:example3c}
	\end{subfigure}

	\caption{Simulation results of Example 3:  The initial positions are marked as green spot and the red curves are trajectories. (a): Case 1 with initial position at $(-2,1.5,0)$. (b): Case 1 with initial position at $(-2,-4,0)$. (c): Case 2 with initial position at $(-2, 3.5,0)$.}
	\label{fig:example3}
\end{figure}

\begin{table}
\begin{tabular}{@{}lcc@{}}
\toprule
& Success rate (case 1) & Success rate (case 2) \\ 
\midrule
Standard method & 10\%                     & 6.7\%                   \\ 
Our method      & 63.3\%                   & 60\%                     \\ 
\bottomrule
\end{tabular}
\caption{Success rate of example 3 with 30 initial states.}
\label{table:3}
\end{table}

\section{CONCLUSIONS}

In this paper, we proposed a learning method for motion planning problems with LTL specifications with continuous state and action spaces. The LTL specification is converted into an annotated LDBA and the deep deterministic policy gradient method is used to train the resulting product MDP. The annotated LDBA is used to continuously shape the reward so that dense reward is available for training. We sample a state from the annotated LDBA at the beginning of each episode in training. We use a car-like robot to test our algorithm with three LTL specifications from different working configurations and initial positions in our simulation. Simulation results show that our method achieves successful trajectories for each of the specifications. For future work, we found out in our simulation that the algorithm sometimes fails to deal with complex working configurations such as non-convex obstacles. We will focus on doing research about improving the algorithm to deal with more complex scenarios and LTL specifications.

\bibliography{root}{}
\bibliographystyle{plain}

\end{document}